\pgfplotsset{compat=1.18}
\newtheorem{proposition}{Proposition}
\definecolor{urlcolor}{RGB}{255,20,147} 
\newenvironment{custommetric}[1]
{\innercustommetric}
{\endinnercustommetric}
\newenvironment{customproposition}[1]
{\innercustomproposition}
{\endinnercustomthm}
\def\eqref#1{equation~\ref{#1}}
\def\1{\bm{1}}
\DeclareMathAlphabet{\mathsfit}{\encodingdefault}{\sfdefault}{m}{sl}
\SetMathAlphabet{\mathsfit}{bold}{\encodingdefault}{\sfdefault}{bx}{n}
\def\gD{{\mathcal{D}}}
\def\gL{{\mathcal{L}}}
\def\sR{{\mathbb{R}}}
\newcommand{\E}{\mathbb{E}}
\newcommand{\Var}{\mathrm{Var}}
\DeclareMathOperator{\sign}{sign}
\newcommand{\firstM}[3]{%
\begin{custommetric}{#1}[Binary leakage bound]\ifthenelse{\equal{#2}{}}{}{\label{#2}}\ifthenelse{\equal{#3}{}}{}{\setcounter{metric}{#3}}
    We define the binary metric
    $
        M_{bin} \triangleq B(1-\alpha; S_n + 1,  n - S_n)
    $
    where $B(\hat{q}; a, b)$ is the $\hat{q}$th-quantile of the beta distribution
    with shape parameters $a$ and $b$.
\end{custommetric}%
}
\newcommand{\binaryProp}[3]{%
\begin{customproposition}{#1}[]\ifthenelse{\equal{#2}{}}{}{\label{#2}}\ifthenelse{\equal{#3}{}}{}{\setcounter{proposition}{#3}}
    With high probability of at least $1-\alpha$, metric $M_{bin}$ represents an upper bound on the probability that the next sample leaks information, $p\leq M_{bin}$.
\end{customproposition}%
}
\newcommand{\secondM}[3]{%
\begin{custommetric}{#1}[General leakage bound]\ifthenelse{\equal{#2}{}}{}{\label{#2}}\ifthenelse{\equal{#3}{}}{}{\setcounter{metric}{#3}}
    Given a specified percentage $x\in[0,1]$ of the information the model should not leak,
    we define the metric \( M_{gen}(x) \triangleq 1-F_n(x)+\epsilon \)\; 
    with \(\epsilon = \sqrt{\frac{\ln(1/\alpha)}{2n}} \).
\end{custommetric}%
}
\newcommand{\generalProp}[3]{%
\begin{customproposition}{#1}[]\ifthenelse{\equal{#2}{}}{}{\label{#2}}\ifthenelse{\equal{#3}{}}{}{\setcounter{proposition}{#3}}
    With high probability of at least $1-\alpha$,
    metric $M_{gen}(x)$ upper-bounds the probability that the next sample leaks more than x\% of the information, $\Pr(X>x) \leq M_{gen}(x)$ for all $x$$\,\in\,$$[0,1]$.
\end{customproposition}%
}
\newcommand{\thirdM}[3]{%
\begin{custommetric}{#1}[Expectation bound]\ifthenelse{\equal{#2}{}}{}{\label{#2}}\ifthenelse{\equal{#3}{}}{}{\setcounter{metric}{#3}}
  We define the metric
    $
        M_\mu \triangleq 1 - \sum_{i=0}^{K-1} \delta_i(F_n(\tau_i)-\epsilon)
    $
    that bounds the expected leakage $\E[X]$ of information with high probability of at least $1-2\alpha$.
\end{custommetric}%
}
\newcommand{\expProp}[3]{%
\begin{customproposition}{#1}[\citet{anderson1969confidence}]\ifthenelse{\equal{#2}{}}{}{\label{#2}}\ifthenelse{\equal{#3}{}}{}{\setcounter{proposition}{#3}}
   We have  \(\E[X] \in [\underline{\mu}, \overline{\mu}]\) with high probability of at least $1-\alpha$ for
   $$
   \underline{\mu} = 1 - \sum_{i=1}^{K} \delta_{i-1}(F_n(\tau_i)+\epsilon)
   \quad\textrm{and}\quad
   \overline{\mu} = 1 - \sum_{i=0}^{K-1} \delta_i(F_n(\tau_i)-\epsilon)
   $$
   where $F_n(x) = \frac{1}{n} \sum_{i=1}^n \mathds{1}{\{ X_i \leq x \}}$ is the empirical CDF, \(\epsilon = \sqrt{\frac{\ln(2/\alpha)}{2n}} \) and $\delta_i = \tau_{i+1}-\tau_i$.
\end{customproposition}%
}
\newcommand{\forthM}[3]{%
\begin{custommetric}{#1}[Standard deviation bound]\ifthenelse{\equal{#2}{}}{}{\label{#2}}\ifthenelse{\equal{#3}{}}{}{\setcounter{metric}{#3}}
    We define the metric $M_\sigma \triangleq \overline{\sigma}$ for
    \[
        \overline{\sigma}^2 = \eta_{K-1} - \eta_0 \underline{F}(\tau_0) 
        + \sum_{i=1}^{K-1} \delta_{i} \left[ \sign(\delta_i)\overline{F}(\tau_i) + (1-\sign(\delta_i))\underline{F}(\tau_i) \right]
    \]
    where $\eta_i = \max_{\kappa\in\{\tau_i, \tau_{i+1}\}, a\in \{\underline{\mu}, \overline{\mu}\}} (\kappa-a)^2$ for $i \in \{0, \ldots, K-1\}$ and $\delta_i = \eta_{i-1} - \eta_i$.
\end{custommetric}%
}
\newcommand{\stdProp}[3]{%
\begin{customproposition}{#1}[]\ifthenelse{\equal{#2}{}}{}{\label{#2}}\ifthenelse{\equal{#3}{}}{}{\setcounter{proposition}{#3}}
    With high probability of at least $1-\alpha$,
    metric $M_\sigma(x)$ upper-bounds the standard deviation of $X$, $\sqrt{\Var[X]} \leq M_\sigma$.
\end{customproposition}%
}
\title{A Probabilistic Perspective on Unlearning and Alignment for Large Language Models}
\author{Yan Scholten, Stephan Günnemann, Leo Schwinn \\
Department of Computer Science \& Munich Data Science Institute\\ Technical University of Munich\\
\texttt{\{y.scholten, s.guennemann, l.schwinn\}@tum.de} \\
}
\begin{document}

\maketitle

\begin{abstract}
Comprehensive evaluation of Large Language Models (LLMs) is an open research problem. Existing evaluations rely on \emph{deterministic} point estimates generated via greedy decoding. However, we find that deterministic evaluations fail to capture the whole output distribution of a model, yielding inaccurate estimations of model capabilities. This is particularly problematic in critical contexts such as unlearning and alignment, where precise model evaluations are crucial. To remedy this, we introduce the first formal \emph{probabilistic} evaluation framework for LLMs. Namely, we propose novel metrics with high probability guarantees concerning the output distribution of a model. Our metrics are application-independent and allow practitioners to make more \emph{reliable} estimates about model capabilities before deployment. Our experimental analysis reveals that deterministic evaluations falsely indicate successful unlearning and alignment, whereas our probabilistic evaluations better capture model capabilities. We show how to overcome challenges associated with probabilistic outputs in a case study on unlearning by introducing (1) a novel loss based on entropy optimization, and (2) adaptive temperature scaling. We demonstrate that our approach significantly enhances unlearning in probabilistic settings on recent benchmarks. Overall, our proposed shift from point estimates to probabilistic evaluations of output distributions represents an important step toward comprehensive evaluations of LLMs.\footnote{Project page: \textcolor{urlcolor}{\url{https://www.cs.cit.tum.de/daml/probabilistic-unlearning/}}}

\end{abstract}

\section{Introduction}

Large Language Models (LLMs) are widely employed across various applications, from chatbots to code generation, relying on outputs generated through \textbf{probabilistic} decoding methods such as beam-search and multinominal sampling. Despite their probabilistic deployment, performance evaluations in LLMs predominately rely on \textbf{deterministic} point estimates, where outputs are generated through greedy 
decoding. This raises a critical research question: 

\vspace{-1pt}
\begin{center}
\textit{Are deterministic evaluations adequate for assessing sensitive applications}\\ \textit{or do they fall short in capturing the risks associated with probabilistic outputs?}
\end{center}
\vspace{-1pt}

Current deterministic evaluations may be misaligned with practical usage by overlooking the inherent variability in model outputs. As a result, they could fail to account for both utility and potential risks associated with the model's entire output distribution. Yet, use cases like model alignment and unlearning require precise evaluations to mitigate the risk of harmful usage or privacy non-compliance during deployment. As illustrated in \autoref{fig:figure1}, an unlearning algorithm might seem to successfully delete information in a deterministic setting but still leak it probabilistically.

In many scenarios, leakage in even a small fraction of samples (such as revealing social security numbers, passwords, or copyrighted content) can be as problematic as widespread leakage. To address this, we empirically assess whether deterministic methods adequately reflect the risk of information leakage. We find that current deterministic evaluations are insufficient and do not capture practical risks in real-world probabilistic settings, and we propose evaluating the LLM's entire output distribution instead of relying on single-point estimates.

Our main contributions are:
\begin{itemize}[noitemsep,nolistsep,topsep=-2pt,leftmargin=0.5cm] 
    \item We demonstrate that simple multinominal sampling breaks all state-of-the-art unlearning algorithms and aligned models, retrieving most if not all of the unlearned or toxic information.
    \item  We are the first to formally model LLM evaluations from a probabilistic perspective and thereby capture the practical risk of information leakage more accurately than existing approaches.
    \item We propose a probabilistic evaluation framework consisting of a suite of principled metrics for comparing LLM output distributions with high-probability guarantees.
    \item We demonstrate how to reduce information leakage in probabilistic unlearning settings by introducing (1) a novel loss based on entropy optimization, and (2) adaptive temperature scaling.
\end{itemize}

\begin{figure}[t!]
    \centering
    \input{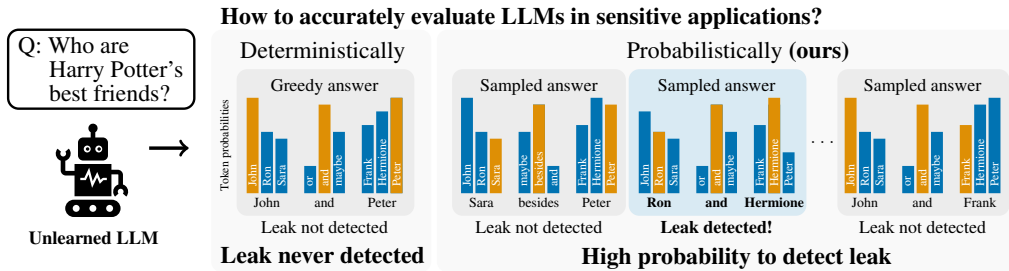}
    \vspace{-0.3cm}
    \caption{We propose a novel \textbf{probabilistic evaluation framework} as a more reliable method for assessing LLM capabilities. Existing evaluations are deterministic and rely on greedy decoding, where the most likely token is selected at each step, producing only a single output per query. Since \textit{in most practical applications LLMs generate outputs probabilistically}, previous evaluation schemes are insufficient: They overlook potential information leaks and falsely suggest successful unlearning.
    In contrast, in our probabilistic evaluation framework we directly consider the LLM's output distribution by sampling from the token probability distribution at each step to generate multiple sequences. In an empirical study, we show that all state-of-the-art unlearning methods leak information under our probabilistic setting, demonstrating that current deterministic evaluations are~insufficient.}
    \label{fig:figure1}
\end{figure}

\section{Related work}

\textbf{Machine Unlearning.} Machine unlearning aims to remove specific information from a model's weights while preserving its overall capabilities~\citep{cao2015towards}. Early works focus on classification tasks~\citep{guo2020certified, golatkar2020eternal, tanno2022repairing, wang2023kga, pawelczyk2023context}. 
Later works consider more complex scenarios, such as language models for text generation~\citep{jang2022knowledge, chen2023unlearn, eldan2023s, kim2024propile, maini2024tofu, sheshadri2024targeted, li2024wmdp}, which we will focus on. 
\citet{maini2024tofu} introduce a synthetic benchmark dataset that allows for controlled learning and unlearning of fictional information. Other works explore broader unlearning contexts, such as removing knowledge about pop culture topics like Harry Potter \citep{eldan2023s}. %
Previous algorithms introduce considerable trade-offs between model capability and effectiveness of unlearning, this includes gradient ascent and gradient difference \citep{liu2022continual}, Kullback-Leibler minimization, or preference optimization \citep{rafailov2024direct}. \citet{zhang2024negative} propose negative preference optimization, which shows notable improvements in balancing model capability and unlearning quality.

\textbf{Attacks against unlearning.} 
Towards more accurate evaluations of unlearning, recent studies have explored whether information supposedly removed by unlearning algorithms can be retrieved using extraction attacks.~\citet{patil2023can} utilize a logit lens~\citep{geva2020transformer} approach to analyze hidden states of LLMs to extract unlearned information. Recently, adversarial attacks in the embedding space of LLMs have been proposed to retrieve harmful~\citep{schwinn2023adversarial} and unlearned information~\citep{schwinn2024soft}. Subsequent works demonstrate that continuous attacks can be used to defend models against such threats~\citep{sheshadri2024targeted,xhonneux2024efficient}. Moreover,~\citet{lynch2024eight} propose a diverse set of methods to robustly evaluate unlearning in LLMs. Beyond extraction attacks, recent studies aim to quantify the degree of memorization in LLMs.~\cite {carlini2022quantifying} estimate that these models memorize at least $1\%$ of their training~dataset. 
\citet{schwarzschild2024rethinking} introduce the adversarial compression ratio as a metric that measures the difficulty of eliciting predefined responses with significantly shorter input prompts. %

\textbf{Certified machine unlearning.}
Beyond empirical unlearning methods, first works guarantee exact unlearning \citep{bourtoule2021machine}, and approximate unlearning based on differential privacy \citep{guo2020certified,neel2021descent,ullah2021machine,chien2022certified,zhang2023fedrecovery} and generalization theory \citep{sekhari2021remember}.
All of these methods propose adapted training techniques that are aware of the need for later unlearning and consequently require training access. However, such methods are not applicable in settings where models have already been trained on data that needs to be unlearned, and are thereby particularly impracticable for LLMs. In contrast, we investigate unlearning for LLMs after models have been trained on data that needs to be unlearned.%

\section{Preliminaries}

\textbf{Language models.}
Without loss of generality, we model language models as parameterized functions 
$\pi_{\theta}: V^* \rightarrow \Delta^{|V|^m-1}$ mapping an input sequence of arbitrary length to a distribution over output sequences of length~$m$, where $\theta$ are the model parameters,
$V$ denotes a vocabulary, and $\Delta^{|V|^m-1}$ is the probability simplex in $\sR^{|V|^m}$.
In other words, for a fixed input sequence $x$$\,\in\,$$V^*$, $\pi_{\theta}(x)$ spans a probability distribution over all possible output sequences $V^m$ of length~$m$.
While we are generally interested in the output distribution $\pi_{\theta}(x)$, in practice we cannot directly access this distribution since the number of possible output sequences $|V|^m$ quickly outgrows the number of atoms in the observable universe. 
Instead, we can only access and evaluate the language model sequentially~as follows: $\pi_{\theta}(y_1, \ldots, y_m | x) = \prod_{t=1}^{m} \pi_{\theta}(y_t | y_{t-1}, \ldots, y_1, x)$, where $\pi_{\theta}(y_t | y_{t-1}, \ldots, y_1, x)$ is the conditional probability of token $y_t$ given previous tokens $y_{t-1}, \ldots, y_1$ and input sequence $x$. This represents a challenge: Without any further knowledge about the distribution $\pi_{\theta}(x)$, practically we can only learn about it via sampling the model's responses for a given input sequence $x$, $Y\sim \pi_{\theta}(x)$.

\textbf{Deterministic evaluation metrics.}
Assume we have a perfect oracle to decide if a generated text leaks toxic or sensitive information. We model this using a function $h: V^m  \rightarrow [0,1]$ that quantifies how much information got leaked, where $h(s)=0$ means $s$ does not leak information, and $h(s)=1$ means complete leakage. For example, $h$ can be binary and indicate if specific data got leaked, or the ROUGE score, which measures the similarity between the model's response and a ground truth.

\textbf{Machine unlearning.} 
The goal of machine unlearning is to remove knowledge from a model while preserving its overall performance.
That is, given a model $\pi_{\theta}$, a forget set $\gD_{FG}$, and a retain set $\gD_{RT}$, we seek an algorithm to transform the model's parameters $\theta$ such that the response $y$ of the updated model $\pi_{\tilde{\theta}}$ does not answer the queries $x$ for all $(x,y)\in\gD_{FG}$ of the forget set. The challenge is that the model's utility should remain high for queries from the retain set $\gD_{RT}$ at the same time.

\section{A Comprehensive evaluation framework for LLMs}\label{sec:metrics}

Current evaluation schemes are insufficient to evaluate LLMs in sensitive applications since they are based on point estimates. To remedy this, we propose a probabilistic evaluation framework. For the sake of clarity, we introduce our framework using the application case of machine unlearning, although our framework generalizes beyond unlearning to other domains as well. First, we properly define four desiderata for machine unlearning that comprehensive evaluations must fulfill:

\begin{tcolorbox}[title=Desiderata for comprehensive machine unlearning evaluations, top=3pt, bottom=3pt, left=15pt, right=15pt, boxsep=1pt]
    
    \begin{itemize}[noitemsep,leftmargin=0.5cm]
        \item[\textbf{I:}] Must quantify the extent of unlearning. %

        \item[\textbf{II:}] Must be efficient to ensure feasibility in practical deployments.

        \item[\textbf{III:}] Must accurately reflect practical leakage risks (e.g., when sampling from the model) and must detect residual information contained in the unlearned model.

        \item[\textbf{IV:}] Must offer guarantees on leakage risks to satisfy real-world use cases.
    \end{itemize}
\end{tcolorbox}

Desideratum \textbf{I} ensures that metrics quantify unlearning and not other unrelated factors. \textbf{II} addresses the practicality of implementing evaluations in real-world scenarios. \textbf{III} and \textbf{IV} focus on minimizing information leakage risk and verifying compliance, particularly crucial for models subject to legal and regulatory requirements in production environments.
Guided by our desiderata for comprehensive machine unlearning evaluations we introduce our probabilistic evaluation framework, proposing metrics with high-probability guarantees for final evaluations in leakage-sensitive environments, along with a metric to help practitioners assess unlearning quality during~development.

\subsection{Metrics for comprehensive evaluations of output distributions}

Computing metrics with guarantees is challenging especially for LLMs since their output distributions are complex and we cannot make any assumptions about them. We propose to overcome this challenge through (1) Monte Carlo sampling to estimate distribution properties, and by (2) introducing novel metrics with formal guarantees based on distribution-free, non-parametric~bounds. Specifically, our metrics are based on concentration bounds that are widely used in the literature, e.g.\ in the context of probabilistic certifiable robustness (expectation-bounds \citep{lecuyer2019certified,cohen2019certified}, CDF-bounds \citep{kumar2020certifying}, variance-bounds \citep{schuchardt2023localized}). 

\begin{minipage}{0.64\textwidth}
Let $q$ denote an input prompt and $Y$$\,\sim\,$$\pi_\theta(q)$ a sequence sampled from the complex distribution that LLMs span over output sequences given $q$. To quantify leakage in probabilistic settings, we compute metrics on the random variable $X$$\,=$$\,h(Y)$, where $h$ quantifies leakage for a single answer $Y$. Specifically, we first sample $n$ independent realizations $Y_1, \ldots, Y_n$ of $Y$ and measure the extent of leakage $X_i = h(Y_i)$ in each realization. Finally, we compute our probabilistic metrics $M(X_1, \ldots, X_n)$, where $M$ can be replaced by the chosen metric that we introduce in the following. We summarize this procedure in Algorithm~\ref{algo:alg3}. 
\end{minipage}%
\hfill
\begin{minipage}{0.34\textwidth}%
    \vspace{-0.8cm}
    \begin{algorithm}[H]
        \footnotesize
        \captionof{algorithm}{\footnotesize Metrics computation}\label{algo:alg3}
        \begin{algorithmic}[1]
            \Require Probabilistic metric $M$
            \State Sample $n$ answers from LLM $\pi_\theta$
            \Statex\hskip0.1em $Y_1, \ldots, Y_n \sim \pi_\theta(q)$
            \State Compute evaluation measure 
            \Statex\hskip0.1em $X_i = h(Y_i)$ for $i=1, \ldots, n$
            \State Compute probabilistic metric 
            \Statex\hskip0.1em $M(X_1, \ldots, X_n)$
        \end{algorithmic}
    \end{algorithm}
\end{minipage}

We now introduce four probabilistic metrics \textbf{M}$_\mathbf{bin}$, \textbf{M}$_\mathbf{gen}$, \textbf{M}$_{\mu}$, \textbf{M}$_{\sigma}$,
which require that one specifies a significance level $\alpha$$\,\leq\,$$\frac{1}{2}$, i.e.\ our metrics hold with an (arbitrarily high) probability of $1$$-$$\alpha$. %

\textbf{Binary case.}
First we consider binary evaluation metrics $h:V^m \rightarrow \{0,1\}$, where $h(Y)\,$$=$$\,1$ means information got leaked.
Then $X$ is a Bernoulli random variable with success probability $p$ corresponding to the probability of leaking information.  
We can upper bound $p$ by sampling from the model's output distribution and by computing a binomial confidence bound: Let $S_n = \sum_{i=1}^n X_i$ count how often information got leaked when sampling from the LLM, where $n$ is the number of Monte-Carlo samples. We propose to compute the following Clopper-Pearson upper confidence bound \citep{clopper1934use} to quantify information leakage (Proof in \autoref{appendix:proofs}):

\firstM{1}{metric1}{1}
\binaryProp{1}{prop:binary}{0}

\textbf{General case.} 
Most applications will require more fine-grained metrics for quantifying information leakage. Considering the general case of arbitrary evaluation metrics $h: V^m \rightarrow [0,1]$, we propose to bound the probability $\Pr[X > x]$ that models leak more than a certain threshold~$x$.
To this~end, 
we bound the CDF $F(x)$ of $X$ with the empirical 
CDF $F_n(x)$$\,=\,$$\frac{1}{n} \sum_{i=1}^n \mathds{1}{\{ X_i \leq x \}}$, which counts how many times at most x\% got leaked given $n$ samples. This can be achieved with the Dvoretzky-Kiefer-Wolfowitz (DKW) inequality, which guarantees that the empirical CDF is a close approximation: $\Pr\left(\sup_{x\in\sR} F_n(x) - F(x) > \epsilon\right)\leq e^{-2n\epsilon^2}$ for all $\epsilon \geq \sqrt{\frac{\ln{(1/2)}}{-2n}}$ \citep{dvoretzky1956asymptotic}.

We introduce the following metric to quantify information leakage in general~(Proof in \autoref{appendix:proofs}):

\secondM{2}{metric2}{2} 
\generalProp{2}{prop:general}{2}

\subsection{Quantifying output distributions with moment bounds}

Besides bounding the probability of leaking information, we can also quantify information leakage by bounding moments of the output distribution of LLMs.
In particular, we propose metrics by bounding moments of the random variable $X=h(Y)$ with high probability using CDF bounds.  

\textbf{Expectation bounds.}
First, we propose to bound the expected secret leakage $\E[X]$ with high probability. Let the points $(\tau_0, \ldots, \tau_K)$ partition the interval $[0,1]$ into $K$ disjoint intervals, meaning $0=\tau_0 \leq \tau_1 \leq \ldots \leq \tau_K=1$. Our metrics are based on the following result (Proof in \autoref{appendix:proofs}):

\expProp{3}{prop:expbound}{3}

We can use the upper bound of \cref{prop:expbound} to define the following metric: 

\thirdM{3}{metric3}{3}

\textbf{Standard deviation bounds.}
The second moment-based metric we propose is an upper bound on the standard deviation of $X$. First we compute the bounds $\overline{F}(x) = F_n(x) + \epsilon$ and $\underline{F}(x) = F_n(x) - \epsilon$ on the CDF $F(x)$ via the DKW inequality \citep{dvoretzky1956asymptotic}. We then use the bounds on the expectation $\underline{\mu}, \overline{\mu}$ of \cref{prop:expbound} to propose the following metric (Proof in \autoref{appendix:proofs}):
\forthM{4}{metric4}{4}
\stdProp{4}{prop:stdbound}{4}

\subsection{Metrics for quantifying output distributions during model development}\label{edscore}

While metrics with high-probability guarantees on the output distribution of LLMs are critical for final evaluations in leakage-sensitive environments, practitioners also require metrics that are both efficient and easy to compute during development. To meet this need, we introduce the Expectation-Deviation score (\textbf{ED score}), which combines expectation and deviation of the distribution of $X$ into a single metric, offering an effective measure of e.g.\ unlearning quality during model development:
\[
S_{ED}(\{X_1, \ldots, X_n\}) = S_{mean} + \rho\cdot S_{sd} 
\]
where \(S_{mean}$$\,=\,$$\frac{1}{n} \sum_{i=1}^n X_i\) is the sample mean and \(S_{sd} = \sqrt{\frac{1}{n} \sum_{i=1}^n (X_i-S_{mean})^2}\) the sample standard deviation. Here, $\rho$ controls the trade-off between mean and standard deviation, representing an application-dependent parameter that can be adjusted based on the application's risk level. In our unlearning experiments we set $\rho=2$ to balance the two components.

\section{Distribution unlearning using entropy optimization and adaptive temperature scaling}

Existing unlearning methods typically focus on the greedy output of a model's output distribution, $\pi_\theta(x)$, overlooking that the unlearned data may still be embedded in the broader distribution rather than the point estimate. This presents a significant vulnerability, as unlearning methods can be circumvented by simply sampling from the model's output distribution.
In addition to introducing improved metrics for evaluating unlearning success from a probabilistic perspective, we propose a novel approach that accounts for output distributions during machine unlearning itself. Our method utilizes entropy optimization and adaptive temperature scaling, which we describe in the following:

\textbf{Entropy optimization.}
First, our goal is to minimize the entropy of the model's output distribution for forget samples~$\gD_{FG}$. To this end, we define the following loss function that corresponds to the entropy of the distribution \(\pi_\theta(y| y_{t-1}, \ldots, y_1, x)\) over the possibilities $y$ for the next token, given the previous tokens $y_{t-1}, \ldots, y_1$ and the input sequence $x$, averaged over all tokens of the sequence:
$
    \ell_\theta(x,y) = \frac{1}{m} \sum_{t=1}^m H(\pi_{\theta}(y| y_{t-1}, \ldots, y_1, x))
$,
where \( H(q) = - \sum_{i=1}^{|V|} q_i \log{q_i}\) is the entropy.

Minimizing the expected loss $\E_{\gD_{FG}}[\ell_\theta(x,y)]$ over forget samples $(x,y)\sim\gD_{FG}$ will force the model to output sequences with lower variability and thus reduce the risk of leaking sensitive information.
While minimizing the entropy of the model's output distribution for forget samples is crucial for unlearning, it is equally important to retain the model's output diversity for retain samples. 
In practice, this can be achieved by introducing an opposing loss term to slightly maximize the expected loss $\E_{\gD_{RT}}[\ell_\theta(x,y)]$ for retain samples $(x,y)\sim\gD_{RT}$ with the objective to maintain the model's entropy for retain distributions.
Overall, we propose the following entropy optimization loss given a fixed positive entropy weight $\lambda_f > 0$ and (small) negative entropy weight $\lambda_r < 0$:
\[
    \gL_{EO}(\theta) = \gL_{UL}(\theta)
    + \lambda_f \E_{\gD_{FG}}[\ell_\theta(x,y)]
    + \lambda_r \E_{\gD_{RT}}[\ell_\theta(x,y)]
\]
where $\gL_{UL}(\theta)$ denotes an existing unlearning loss, for example the NPO loss \citep{zhang2024negative}.
By applying a positive entropy weight $\lambda_f$ to forget samples and a negative weight $\lambda_r$ to retain samples we aim to selectively reduce output diversity for unlearning-related queries while preserving variability elsewhere (see visualization in \autoref{fig:figure2}).
Notably, our entropy optimization method is highly modular and can be applied on top of any existing unlearning method.

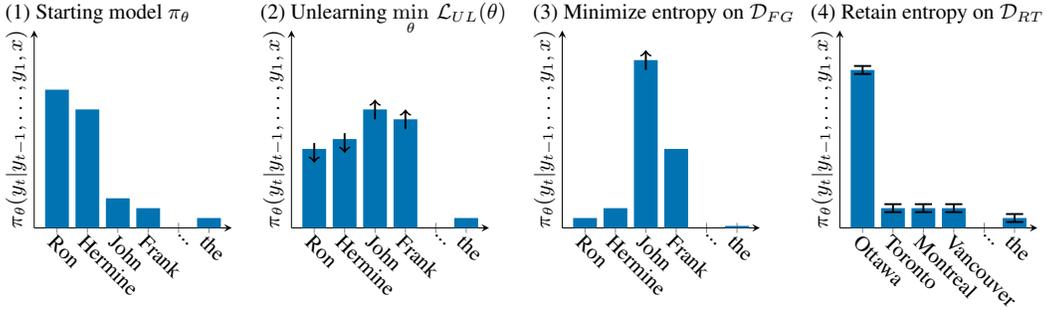
\begin{figure}[t!]
    \centering
        \centering
        \def\ypos{0}
\def\bw{4mm} %
\def\xgap{1mm}
\def\bh{3} %
\begin{tikzpicture}[scale=0.9]
	\definecolor{blue}{rgb}{0.00392156862745098, 0.45098039215686275, 0.6980392156862745}	
	\tikzset{
		node/.style = {
			rectangle split,
			rectangle split parts=2,
			rotate=90,minimum size=0.5cm, thick, align=center, draw=black, scale=0.2mm, 
		},
		node1/.style={
			rectangle split part fill={white,black},
		},
		node2/.style={
			rectangle split part fill={black,white},
		},
		perturbed/.style = {
			rectangle split part fill={red,white},
			draw=red,
		},
		flipped1/.style = {
			selected,
			rectangle split part fill={black!50,white},
		},
		flipped2/.style = {
			selected,
			rectangle split part fill={black!50,black!50},
		},
		flipped3/.style = {
			selected,
			rectangle split part fill={white,black!50},
		},
		selected/.style = {
			densely dashed
		},
		ablated/.style = {
			inner color=black,outer color=red
		},
		message/.style = {
			very thick,
			decorate,
			decoration={snake,amplitude=.15mm,segment length=1mm, post length=0.5mm,pre length=0.5mm},
			draw=red,
			>=stealth,
			->,
		},
		ablatedMessage/.style = {
			message,
			draw=black!40
		},
		edge/.style = {
			thick
		},
		ablatedEdge/.style = {
			edge,
			dashed
		},
		legend/.style = {
			text depth=0.5mm
		}
	}

	\begin{axis}[
		title={(1) Starting model $\pi_\theta\phantom{\displaystyle\min_\theta\;}$},
		title style={font=\footnotesize, yshift=-0.4cm, xshift=-0.2cm},
        ybar, %
        symbolic x coords={Ron, Hermine, John, Frank, ..., the}, %
        xtick=data, %
        xticklabel style={rotate=-45, anchor=north west,yshift=3.5pt,xshift=-3.5pt}, %
        ymin=0, ymax=1, %
		bar width=10pt, %
		width=4.5cm, %
		height=4.5cm, %
        ylabel={$\pi_{\theta}(y_t|y_{t-1},\ldots,y_1,x)$},
		ylabel style={font=\footnotesize, yshift=-0.1cm},
		axis lines=left,
		enlarge x limits=0.15, %
        every axis x label/.style={at={(ticklabel* cs:1)}, anchor=north}, %
		tick label style={font=\footnotesize}, %
		label style={font=\footnotesize}, %
		every axis/.append style={font=\footnotesize}, %
		ytick=\empty, %
		yticklabels=\empty, %
    ]
    \addplot[fill=blue,draw opacity=0] coordinates {(Ron,0.7) (Hermine,0.6) (John,0.15) (Frank,0.1)(...,0)(the,0.05)};
    \end{axis}

	\begin{axis}[
		title={(2) Unlearning $\displaystyle\min_\theta\, \mathcal{L}_{UL}(\theta)$},
		title style={font=\footnotesize, yshift=-0.4cm, xshift=-0.1cm},
        ybar, %
        symbolic x coords={Ron, Hermine, John, Frank, ..., the}, %
        xtick=data, %
        xticklabel style={rotate=-45, anchor=north west,yshift=3.5pt,xshift=-3.5pt}, %
        ymin=0, ymax=1, %
		bar width=10pt, %
		width=4.5cm, %
		height=4.5cm, %
        ylabel={$\pi_{\theta}(y_t|y_{t-1},\ldots,y_1,x)$},
		ylabel style={font=\footnotesize, yshift=-0.1cm},
		axis lines=left,
		enlarge x limits=0.15, %
        every axis x label/.style={at={(ticklabel* cs:1)}, anchor=north}, %
		tick label style={font=\footnotesize}, %
		label style={font=\footnotesize}, %
		every axis/.append style={font=\footnotesize}, %
		ytick=\empty, %
		yticklabels=\empty, %
		xshift=3.8cm
    ]
    \addplot[fill=blue,draw opacity=0] coordinates {(Ron,0.4) (Hermine,0.45) (John,0.6) (Frank,0.55)(...,0)(the,0.05)};
	\draw[->,thick] (axis cs:Ron,0.43) -- (axis cs:Ron,0.33);
	\draw[->,thick] (axis cs:Hermine,0.48) -- (axis cs:Hermine,0.38);
	\draw[->,thick] (axis cs:John,0.55) -- (axis cs:John,0.65);
	\draw[->,thick] (axis cs:Frank,0.50) -- (axis cs:Frank,0.60);
    \end{axis}

	\begin{axis}[
		title={(3) Minimize entropy on $\mathcal{D}_{FG}\phantom{\displaystyle\min_\theta\;}$},
		title style={font=\footnotesize, yshift=-0.4cm, xshift=0.36cm},
        ybar, %
        symbolic x coords={Ron, Hermine, John, Frank, ..., the}, %
        xtick=data, %
        xticklabel style={rotate=-45, anchor=north west,yshift=3.5pt,xshift=-3.5pt}, %
        ymin=0, ymax=1, %
		bar width=10pt, %
		width=4.5cm, %
		height=4.5cm, %
        ylabel={$\pi_{\theta}(y_t|y_{t-1},\ldots,y_1,x)$},
		ylabel style={font=\footnotesize, yshift=-0.1cm},
		axis lines=left,
		enlarge x limits=0.15, %
        every axis x label/.style={at={(ticklabel* cs:1)}, anchor=north}, %
		tick label style={font=\footnotesize}, %
		label style={font=\footnotesize}, %
		every axis/.append style={font=\footnotesize}, %
		ytick=\empty, %
		yticklabels=\empty, %
		xshift=7.8cm
    ]
    \addplot[fill=blue,draw opacity=0] coordinates {(Ron,0.05) (Hermine,0.1) (John,0.85) (Frank,0.4)(...,0)(the,0.01)};
	\draw[->,thick] (axis cs:John,0.8) -- (axis cs:John,0.9);
    \end{axis}

	\begin{axis}[
		title={(4) Retain entropy on $\mathcal{D}_{RT}\phantom{\displaystyle\min_\theta\;}$},
		title style={font=\footnotesize, yshift=-0.4cm, xshift=0.15cm},
        ybar, %
        symbolic x coords={Ottawa, Toronto, Montreal, Vancouver, ..., the}, %
        xtick=data, %
        xticklabel style={rotate=-45, anchor=north west,yshift=3.5pt,xshift=-3.5pt}, %
        ymin=0, ymax=1, %
		bar width=10pt, %
		width=4.5cm, %
		height=4.5cm, %
        ylabel={$\pi_{\theta}(y_t|y_{t-1},\ldots,y_1,x)$},
		ylabel style={font=\footnotesize, yshift=-0.1cm},
		axis lines=left,
		enlarge x limits=0.15, %
        every axis x label/.style={at={(ticklabel* cs:1)}, anchor=north}, %
		tick label style={font=\footnotesize}, %
		label style={font=\footnotesize}, %
		every axis/.append style={font=\footnotesize}, %
		ytick=\empty, %
		yticklabels=\empty, %
		xshift=11.9cm
    ]
    \addplot[fill=blue,draw opacity=0] coordinates {(Ottawa,0.8) (Toronto,0.1) (Montreal,0.1) (Vancouver,0.1)(...,0)(the,0.05)};
	\draw[|-|,thick] (axis cs:Ottawa,0.775) -- (axis cs:Ottawa,0.825);
	\draw[|-|,thick] (axis cs:Toronto,0.075) -- (axis cs:Toronto,0.125);
	\draw[|-|,thick] (axis cs:Montreal,0.075) -- (axis cs:Montreal,0.125);
	\draw[|-|,thick] (axis cs:Vancouver,0.075) -- (axis cs:Vancouver,0.125);
	\draw[|-|,thick] (axis cs:the,0.025) -- (axis cs:the,0.075);
    \end{axis}

  \end{tikzpicture}
        \vspace{-15pt}
    \caption{\textbf{Entropy optimization:} 
        In this example the model (1) must unlearn the answer to the question ``Who are Harry Potter's best friends?'' while retaining the answer to the question ``What is the capital of Canada?''.
        While minimizing the unlearning loss (2) ensures that the model forgets the sensitive information, our method minimizes the entropy of the model's output distribution for forget samples (3) and retains it on retain samples (4). 
        This allows us to selectively reduce entropy for unlearning-related queries while maintaining entropy on retain samples, effectively reducing the risk of leaking sensitive information under sampling attacks without compromising diversity.
    }
    \label{fig:figure2}
\end{figure}

\textbf{Adaptive temperature scaling.} As we demonstrate in our experiments, entropy optimization is an effective method to decrease the model's entropy for questions related to the forget set while retaining the entropy of the output distribution for unrelated data. This allows us to additionally adjust the temperature of the model adaptively depending on the certainty of the current generation $c(x)\,$$=$$\frac{1}{m}\sum_{t=1}^{m}p(\hat{y}_t| y_{t-1}, \ldots, y_1, x)$, where $p(\hat{y}_t| y_{t-1}, \ldots, y_1, x)$ is the probability of the most likely token $\hat{y}_t$ of the distribution $\pi_{\theta}(y| y_{t-1}, \ldots, y_1, x)$ over all possible tokens $y$. 
Specifically, we define a confidence threshold $c_T$ and set the temperature $\tau$ of the model to $0$ if the average confidence of the sequence $c(x)$ is over the threshold. This further reduces the risk of information leakage under sampling with no considerable effect on the diversity of the model outputs. Although hard thresholding was sufficient to substantially decrease information leakage with no effect on generation diversity in our experiments, more sophisticated temperature scaling could be applied to further improve the trade-off between diversity and information leakage in the future.

\section{Experimental evaluation}

In the following we provide experimental evaluations on recent alignment and unlearning datasets, demonstrating that \textbf{existing deterministic evaluations are insufficient} for capturing practical risks. We show that (1) previous methods are prone to significant leakage, and that (2) we can measure the residual information contained in a model more accurately by using our probabilistic evaluation framework (see~\S\ref{sec:metrics}). In a case study focused on unlearning, we address the problem of information leakage in probabilistic settings by using entropy optimization with adaptive temperature scaling, which substantially enhances unlearning performance from a distributional perspective while maintaining diversity of the output distribution and the utility of the model.

We provide detailed descriptions of the corresponding experimental setups and hyperparameters regarding all methods for the unlearning and alignment experiments in \autoref{sec:setup}. In~short:

\textbf{Experimental setup for unlearning.} We conduct experiments on TOFU \citep{maini2024tofu}, which consists of $200$ fictitious author profiles split into a retain and forget set, where the retain set is used to maintain model capabilities and the forget set is used for unlearning. All TOFU experiments are performed with the Phi-1.5 model~\citep{li2023textbooks}. In addition to TOFU, we conduct experiments on the Llama-2-Who-is-Harry-Potter model, which was unlearned to remove any Harry Potter-related knowledge~\citep{eldan2023s}. We use the recently proposed Harry Potter Q\&A dataset for evaluation~\citep{schwinn2024soft}. 
In all experiments, we use the ROUGE-L score \citep{lin-2004-rouge} to measure information contained in the unlearned models since it directly measures information leakage w.r.t.\ a ground truth reference and is widely used in unlearning. We further use the self-BLEU score~\citep{zhu2018texygen} to investigate the influence of our proposed unlearning algorithm on generation diversity. As baselines we use Gradient Ascent (GA), Gradient Difference (GD)~\citep{liu2022continual}, RMU~\citep{li2024wmdp}, and NPO \citep{zhang2024negative}, and combine NPO with entropy optimization and adaptive temperature scaling for our approach since it is the current state-of-the-art.

\textbf{Experimental setup for alignment.}
We conduct our alignment experiments on the $100$ harmful behaviors dataset of JailbreakBench (JBB)~\citep{chao2024jailbreakbench}.  Toxicity scores are derived from the Harmbench toxicity classifier~\citep{mazeika2024harmbench}, which provides the probability of an answer being rated as toxic. We conduct our evaluations on Phi-1.5~\citep{li2023textbooks}, Vicuna-7b-1.5~\citep{vicuna2023}, and Mistral-7b-instruct-v0.3~\citep{jiang2023mistral}.

\subsection{Unlearning case study: Improving evaluations in probabilistic settings}

Existing metrics used to measure unlearning quality in LLMs already fulfill desiderata \textbf{I} and~\textbf{II}, i.e., they quantify the extent of unlearning and are efficient to compute. In the following case study on unlearning, we show that point-wise evaluations do not satisfy the remaining desiderata and are thus insufficient for capturing practical risks. To address their limitations and satisfy all desiderata, we use the metrics introduced in our probabilistic evaluation framework~(\S\ref{sec:metrics}), which address desiderata \textbf{III} and \textbf{IV} and better capture the risk of information leakage in probabilistic settings.

\textbf{Unlearning on Harry Potter Q\&A.} Figure~\ref{fig:motivation_unlearning} (a) compares unlearning evaluations conducted either with (deterministic) greedy decoding or probabilistic sampling given the Llama-2-Who-is-Harry-Potter model. We adopt the approach of~\citet{schwinn2024soft} and define information as leaked if a generated answer contains the relevant keyword for a given question. This binary leakage (either present or absent) allows us to apply our binary leakage bound (\textbf{M}$_\mathbf{bin}$) to quantify the extent of information leakage. 
While point-wise evaluations wrongly indicate that no information is contained in the model after unlearning, in our experiment, simply sampling from the model's output distribution reveals that the model still leaks information (i.e., generates correct responses to the questions).  
Thus, the deterministic evaluation violates desiderata \textbf{III} and \textbf{IV}, underestimating the leakage risk and providing no guarantee that the model does not leak information in a deployment scenario. %
In contrast, our probabilistic binary leakage bound gives a more accurate estimate of the residual information still contained in the model (\textbf{III}) and provides a high probability guarantee (\textbf{IV}).

\textbf{Unlearning on TOFU.} The subsequent subfigures (b-f) explore the same phenomenon for $1024$ generated responses for one individual question of the TOFU dataset~\citep{maini2024tofu}. In (b-c), we compare leakage of different unlearning methods for this question for both deterministic and probabilistic evaluations. Although the paired unlearning methods exhibit identical leakage under greedy decoding (as indicated by the bold dashed line), their distributions show substantial differences. This demonstrates that models with identical deterministic evaluation metrics can still behave differently during sampling, supporting our finding that deterministic metrics alone are insufficient. In (d), we compute the general leakage bound (\textbf{M}$_\mathbf{gen}$), which highlights that  NPO exhibits a considerable leakage risk. In \autoref{fig:motivation_unlearning}~(e) we compare the sample estimate $\mu$ and its upper bound $\overline{\mu}$ on the expected leakage $\E[X]$ for different sample sizes, and \autoref{fig:motivation_unlearning}~(f) shows a similar comparison for the standard deviation. The empirical estimates converge quickly with an increasing number of samples in practice, allowing for precise and efficient estimates. The number of samples can be adjusted based on the sensitivity of the application, addressing desiderata \textbf{II} and \textbf{IV} by providing a flexible framework that considers efficiency and compliance verification. Similar to the Harry Potter Q\&A, our probabilistic framework reveals considerable residual information after unlearning.%

\begin{figure}[h!]
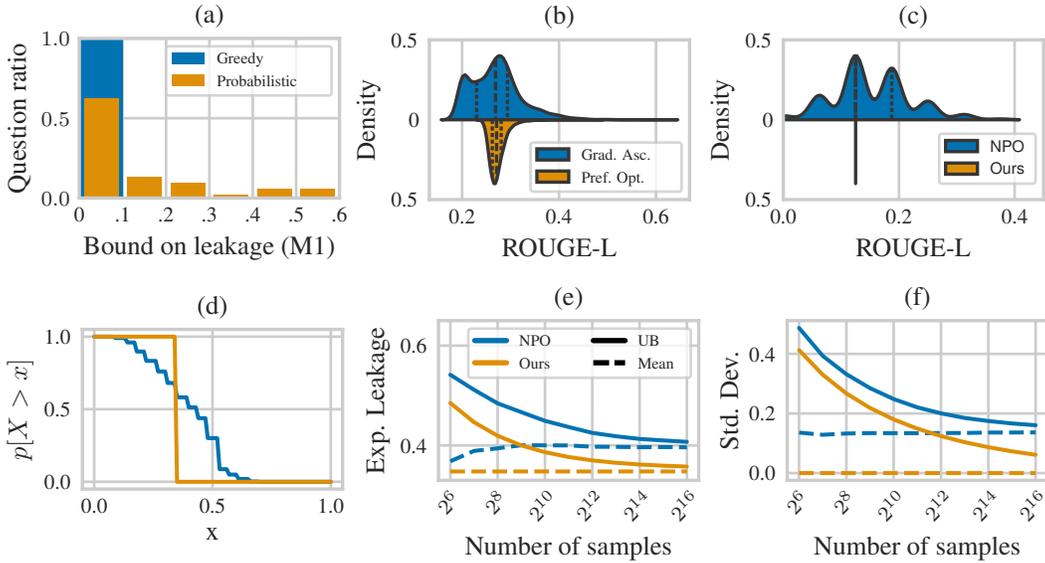

    \centering
    \begin{minipage}{0.32\textwidth}
        \input{figures/HP_upper_bound_on_p.pgf}
    \end{minipage}
    \hfill
    \begin{minipage}{0.33\textwidth}
        \input{figures/prob_unlearning.pgf}
    \end{minipage}%
    \hfill
    \begin{minipage}{0.33\textwidth}
        \input{figures/violin_rougel_npo.pgf}
    \end{minipage}%
    \hfill
    \begin{minipage}{0.33\textwidth}
        \raisebox{2.3mm}{\input{figures/cdf.pgf}}
    \end{minipage}
    \hfill
    \begin{minipage}{0.32\textwidth}
        \input{figures/mean.pgf}
    \end{minipage}
    \hfill
    \begin{minipage}{0.32\textwidth}
        \input{figures/std.pgf}
    \end{minipage}
    \vspace{-10pt}
    \caption{
    Our results demonstrate that deterministic evaluations fail to detect residual information still contained after unlearning, whereas our probabilistic metrics provide more comprehensive evaluations: (a) Binary leakage bound (\textbf{M}$_\mathbf{bin}$) for questions of the Harry Potter Q\&A. While greedy decoding indicates successful unlearning, our probabilistic perspective reveals that for 38\% of the questions the upper bound on the expected leakage is larger than 10\%. (b-c) ROUGE-L score of $1024$ generated responses from a single question of the TOFU dataset. The bold dashed line indicates the ROUGE-L score of greedy decoding. The second row contains results for NPO and our proposed unlearning algorithm for a question-answer pair of the TOFU forget set. (d) General leakage bound (\textbf{M}$_\mathbf{gen}$) illustrating differences in information leakage between NPO and our approach for different levels of leakage $x$. (e-f) Expectation bound (\textbf{M}$_\mathbf{\mu}$) and standard deviation bound (\textbf{M}$_\mathbf{\sigma}$). %
    } 
    \label{fig:motivation_unlearning}
    \vspace{-0.45cm}
\end{figure}

\begin{wraptable}{r}{6.2cm}
    \footnotesize
    \caption{Comparison of deterministic (det.) and probabilistic (prob.) evaluations on the TOFU dataset (90/10 split).
    While the point-wise metric already indicates good unlearning, our metrics reveal that their distributions still encode the information.}
    \label{tab:results}
    \vspace{-1pt}
    \begin{tabular}{@{}l@{\hskip 5pt}l@{\hskip 4pt}c@{\hskip 4pt}c@{\hskip 4pt}c@{\hskip 4pt}c@{\hskip 4pt}c@{}}
    \toprule
    & Metric ($\downarrow$) & RMU &  GD & GA & NPO & \textbf{Ours} \\
    \midrule
    Det. & ROUGE-L & 0.70 & 0.33 & 0.32 & 0.22 
    & \textbf{0.20} \\
    \midrule
    Prob. & ED Score & 0.81 & 0.42 & 0.41 & 0.34 & \textbf{0.20} \\
    & \; - Mean & 0.60 & 0.32 & 0.31 & 0.21 & \textbf{0.20} \\
    & \; - Std. Dev. & 0.10 & 0.05 & 0.05 & 0.06 & \textbf{0.00} \\
    \bottomrule
    \end{tabular}
    \vspace{-1pt}
\end{wraptable}
\textbf{Extended analysis.} We provide further analysis on the TOFU dataset in \autoref{tab:results}. For unlearning methods GA and GD, the empirical mean matches the ROUGE-L score obtained from greedy decoding, indicating that the point-wise evaluation correctly approximates the leakage risk. However, we observe considerable standard deviation for both methods, indicating substantial leakage. Our proposed ED (Expectation-Deviation) score (\S\ref{edscore}) condenses empirical mean and standard deviation into a single value, offering a direct estimate of the leakage risk. This score provides a practical alternative to metrics with high probability guarantees while remaining more accurate than deterministic evaluations.
\vspace{-0.3cm}

\subsubsection{The effect of entropy regularization on LLM unlearning}

To mitigate the leakage risk during sampling, we introduce entropy optimization to selectively decrease the model's entropy on the forget set. This approach aims to decrease the variance of the output distribution, as illustrated in Figure \ref{fig:motivation_unlearning} (c). In (d-f), we compute our metrics from~\S\ref{sec:metrics}, highlighting that our entropy optimization approach does not leak more information than a certain threshold, while NPO exhibits a considerable leakage risk.
In \autoref{fig:results_tofu} (a) we further demonstrate the effects of the forget entropy regularization parameter $\lambda_f$ on two TOFU dataset splits (90/10 and 95/5). As we increase the regularization strength, the diversity for unlearning-related queries approaches zero, eliminating the risk of information leakage during sampling.
An alternative approach could be lowering the softmax temperature $\tau$ to reduce output diversity. As $\tau$ approaches~0, sampling converges to greedy generation. \autoref{fig:results_tofu} (b) shows that lower $\tau$ reduces the standard deviation of ROUGE-L scores, indicating less diversity. However, this affects both unlearning-related and unrelated tasks indiscriminately. The next section shows that our approach better preserves~diversity.

\begin{figure}[h!]
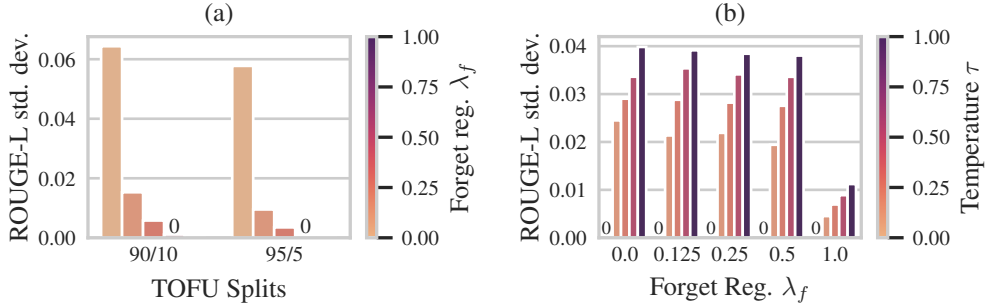

    \centering
    \begin{minipage}{0.48\textwidth}
        \input{figures/reg_weight_sampling.pgf}
    \end{minipage}
    \begin{minipage}{0.48\textwidth}
        \input{figures/temperature_sampling.pgf}
    \end{minipage}
    \vspace{-0.3cm}
    \caption{(a) Effect of forget entropy regularization weight $\lambda_f$ on the standard deviation of the leakage distribution. Stronger regularization decreases the probability of leaking information. (b) Decreasing temperature $\tau$ also decreases model leakage, but also results in lower output~diversity.}
    \vspace{-0.3cm}
    \label{fig:results_tofu}
\end{figure}

\subsubsection{Maintaining output diversity and model utility in LLM unlearning}\label{sec:diversity_utility}%
Entropy optimization effectively reduces information leakage in our experiments. At the same time, unlearning methods should not negatively affect other properties of the model, such as output diversity, model confidence, and overall utility. We investigate these metrics using the \textit{Real Authors} and \textit{World Facts} datasets of TOFU, which were not used during training. %

\begin{figure}[h!]
    \vspace{-0.3cm}
    \centering
    \begin{minipage}{0.32\textwidth}
        \centering
        \input{figures/diversity_real_authors.pgf}
    \end{minipage}
    \hfill
    \begin{minipage}{0.32\textwidth}
        \centering
        \input{figures/retain_forget_max_prob.pgf}
    \end{minipage}
    \hfill
    \begin{minipage}{0.32\textwidth}
        \centering
        \input{figures/model_utility_vs_forget.pgf}
    \end{minipage}
    \vspace{-0.3cm}
    \caption{Ablation studies for our proposed entropy optimization approach: (a) Negative effects on output diversity can be mitigated through a negatively weighted ($\lambda_r$) entropy loss. (b) Token confidence on the forget set considerably increases during training, remaining largely the same on the retain set. This allows us to decrease information leakage while maintaining output diversity for unrelated tasks. (c) Models trained with random entropy regularization parameters. We observe no relation between the magnitude of regularization and model utility in our experiments.}
    \label{fig:utility}
    \vspace{-0.3cm}
\end{figure}
\textbf{(a) Diversity.} \autoref{fig:utility} (a) shows the impact of the retain entropy regularization coefficient $\lambda_r$ on output diversity (i.e., 1 $-$ self-BLEU) for $\lambda_f=1$ on the 99/1 split. The final score is obtained by averaging scores across all questions of the dataset and ranges from $0$ (identical outputs) to $1$ (no similarity). The dashed line represents an NPO-unlearned model without entropy regularization, while the blue line shows the entropy-regularized NPO. As $\lambda_r$ decreases (becomes more negative), diversity improves, surpassing the baseline NPO model. This suggests that regularizing entropy on the retain set successfully prevents diversity degradation on datasets unrelated to the forget objective.

\textbf{(b) Training confidence trajectories.} \autoref{fig:utility} (b) illustrates the model's confidence over training epochs for both retain and forget sets. The solid lines represent the retain set, while the dashed lines show the forget set. Multiple trajectories likely represent different experimental conditions or hyperparameter settings. We observe that confidence generally increases over epochs for both sets, with the retain set typically maintaining higher confidence. The trajectories indicate that the model can differentiate between retain and forget information while learning.

\textbf{(c) Impact on unlearning and model utility:} \autoref{fig:utility} (c) compares the ED score against model utility for different data split ratios of the TOFU dataset (90/10, 95/5, 99/1). %
Each point represents an NPO-unlearned model with random regularization parameters $\lambda_f \in [0, 1]$. In our experiments, the impact of entropy regularization on model utility is minor, with regularized models achieving higher utility than standard NPO in some cases (see \autoref{app:utility}). Overall, our proposed entropy regularization approach can achieve a nuanced balance between unlearning robustness, output diversity, and overall model utility. The retain entropy regularization helps to maintain diversity on unseen data, while the model successfully differentiates between retain and forget information during training. 

\begin{figure}[h!]
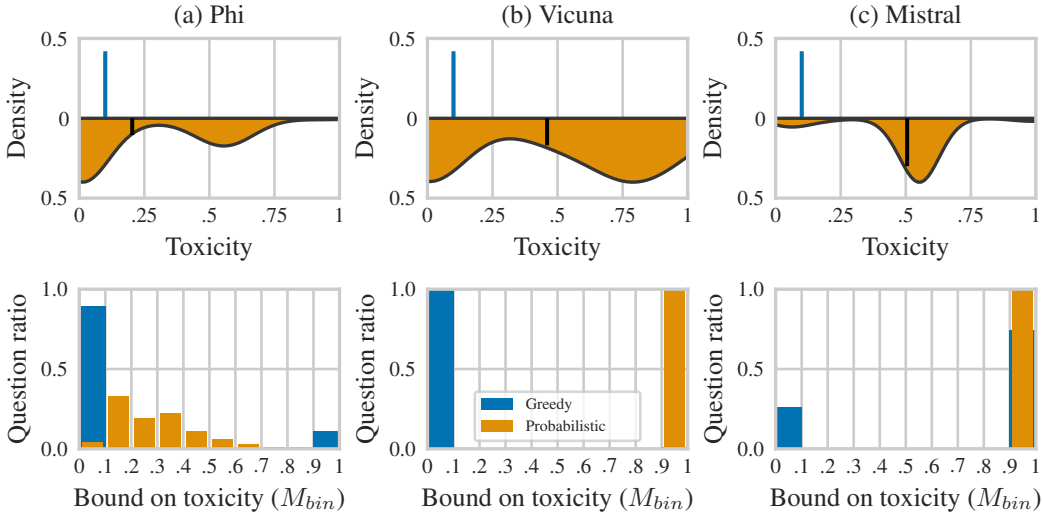

    \begin{minipage}[t]{0.3\textwidth}
        \input{figures/phi.pgf}
    \end{minipage}%
    \hspace{0.35cm}
    \begin{minipage}[t]{0.3\textwidth}
        \input{figures/vicuna.pgf}
    \end{minipage}%
    \hspace{0.35cm}
    \begin{minipage}[t]{0.3\textwidth}
        \input{figures/mistral.pgf}
    \end{minipage}
    
    \begin{minipage}[t]{0.3\textwidth}
        \input{figures/phi_upper_bound.pgf}
    \end{minipage}%
    \hspace{0.35cm}
    \begin{minipage}[t]{0.3\textwidth}
        \input{figures/vicuna_upper_bound.pgf}
    \end{minipage}%
    \hspace{0.35cm}
    \begin{minipage}[t]{0.3\textwidth}
        \input{figures/mistral_upper_bound.pgf}
    \end{minipage}
    \caption{Probabilistic evaluation results for all toxic queries of the JailbreakBench (JBB) dataset. In the first row, we show the toxicity score of $1024$ generated responses from a single query of the JBB dataset. The bold black line indicates the mean toxicity value of the probabilistic evaluation, whereas the bold blue line shows the toxicity score of one greedy evaluation for the same question. The expected toxicity value under probabilistic evaluation is consistently higher. The second row shows the binary leakage bound (\textbf{M}$_\mathbf{bin}$). While greedy decoding generally indicates that the models are robust, our probabilistic perspective reveals that all models are not robust under sampling.
    } 
    \label{fig:alignment}
\end{figure}

\subsection{Beyond LLM unlearning: Previous alignment evaluations do not capture practical risks}

Our probabilistic evaluations only require evaluation measures for single outputs and can be applied seamlessly across various contexts. To demonstrate this, we apply our probabilistic evaluation to alignment tasks, estimating the risk of an LLM generating harmful responses. In the first row of \autoref{fig:alignment} we visualize the fraction of toxic answers among $1024$ generated responses for a specific query from the JBB dataset. This is compared to the toxicity observed under deterministic evaluation using greedy decoding.
Across all models, average toxicity measured via sampling significantly exceeds that observed through greedy decoding. In the second row, we present the binary leakage bound (\textbf{M}$_\mathbf{bin}$) for the full JBB dataset. Results consistently show that greedy decoding underestimates model toxicity, underscoring the limitations of deterministic evaluations in high-stakes~applications.

\textbf{Limitations.} While our proposed probabilistic evaluation framework approach offers substantial improvements over deterministic point-wise evaluations, it still cannot assess the entire output distribution of LLMs holistically for any possible input. Due to computational constraints, we instead analyze the output distributions of a model for fixed inputs. Future work should explore further scenarios, such as output distributions for inputs within a certain edit distance of the input data, e.g.\ in the challenging context of adversarial alignment \citep{schwinn2025adversarial}. %

\section{Conclusion}

We introduce a probabilistic perspective on LLM evaluations and propose a novel framework to directly assess the output distribution of a model. 
Our proposed perspective shift from single point estimates towards evaluating entire output distributions offers significant potential for the field of unlearning and can be directly used for evaluating a variety of sensitive applications beyond unlearning, such as measuring toxicity and mitigating undesired biases in model outputs. Furthermore, our framework lays the groundwork for developing metrics for quantifying leakage in distributions beyond text, extending to generative models in image, audio, and other modalities.
Our work represents an important contribution towards comprehensive LLM evaluations and provides a foundation for future research in this~area, such as investigating model utility from a probabilistic perspective.

\subsection*{Broader impact}

Our work highlights the limitations of current LLM evaluations being conducted in a deterministic manner. By introducing a probabilistic evaluation framework, we enable more accurate assessments of model behavior and potential risks. This approach could lead to improved safety and reliability in AI systems, more effective unlearning techniques enhancing privacy protection, and better alignment of AI models. Additionally, our methods could reveal previously unknown vulnerabilities in existing models. Overall, this research contributes to more accurate evaluations of generative models.

\subsection*{Reproducibility statement}

We ensure reproducibility by providing detailed documentation of all hyperparameters, model architectures, and experimental setups in \autoref{sec:setup}. All datasets and architectures used in this paper are publicly available, and our implementation is accessible via the following project page: \textcolor{urlcolor}{\url{https://www.cs.cit.tum.de/daml/probabilistic-unlearning/}}.

\subsubsection*{Acknowledgments}

The authors want to thank Jan Schuchardt for valuable feedback on the manuscript, and  Mato Gudelj for providing an initial code base. This work has been funded by the DAAD program
Konrad Zuse Schools of Excellence in Artificial Intelligence (sponsored by the Federal Ministry of
Education and Research). The authors
of this work take full responsibility for its content.

\bibliography{references}
\bibliographystyle{iclr2025_conference}

\appendix
\section{Experimental setup}\label{sec:setup}

\textbf{Hardware details.} All experiments are conducted on a NVIDIA A100 GPU with 40GB of memory. 

\subsection{Experimental setup for LLM unlearning experiments}

\textbf{Datasets and models.} We use two recent unlearning benchmarks for our evaluations. We conduct experiments on TOFU, which consists of $200$ fictitious author profiles~\citep{maini2024tofu}. These profiles are split into a retain and forget set, where the retain set is used to maintain model capabilities and the forget set is used for unlearning. Additionally, each profile is divided into multiple question-answer pairs. 
TOFU provides three different unlearning splits where $99$, $95$, or $90$ percent of the data is used as retain set and the remainder as forget set. For measuring model utility after unlearning, TOFU additionally provides the \textit{Real Authors} and \textit{World Facts} datasets. All TOFU experiments are performed with the Phi-1.5 model~\citep{li2023textbooks}. 
In addition to TOFU, we conduct experiments on the Llama-2-Who-is-Harry-Potter model, which was unlearned to remove any Harry Potter-related knowledge~\citep{eldan2023s}. We use the recently proposed Harry Potter Q\&A for evaluation~\citep{schwinn2024soft}. This dataset consists of pairs of questions and relevant keywords, allowing for the detection of information leakage through keyword matching.

\textbf{Baseline metrics.}  In all experiments, we use ROUGE-L as a deterministic metric to measure information contained in the model after unlearning. ROUGE-L computes a statistic based on the longest common subsequence between two strings~\citep{lin-2004-rouge}. Additionally, we use the ROUGE-L score obtained from multiple sampled responses to compute probabilistic metrics, such as bounds, mean, standard deviation, and the expectation-deviation (ED) score. Note that our framework (§\ref{sec:metrics}) can be applied to all deterministic metrics, such as perplexity or forget quality. We chose ROUGE-L as it directly measures information leakage with respect to a ground truth reference and is widely used in the unlearning domain. Throughout the manuscript, we use information leakage to refer to the magnitude of the ROUGE-L score, where a high score indicates high information leakage. We use the model utility score as described in TOFU to measure the generation quality of a given model~\cite{maini2024tofu}. We additionally employ the self-BLEU score~\citep{zhu2018texygen}, which computes BLEU scores~\citep{papineni2002bleu} between generated samples and allows us to investigate the influence of our proposed unlearning algorithm on generation diversity. 

\textbf{Unlearning methods.} We use Gradient Ascent (GA), Gradient Difference (GD)~\citep{liu2022continual}, RMU~\citep{li2024wmdp}, and NPO \citep{zhang2024negative} for a diverse selection of unlearning baselines and combine NPO with entropy optimization and adaptive temperature scaling for our approach since it is the current state-of-the-art.

\textbf{Hyperparameters.} For all unlearning algorithms we use a learning rate of $1e-5$ with a cosine learning rate schedule with warmup ratio of $0.1$, batch size of $32$, and weight decay of $0.01$. For NPO we set $\beta_{NPO}=0.05$. We use $10$ training epochs for all experiments as in~\citep{maini2024tofu}. For probabilistic evaluations we sample $n=1024$ model generations for every experiment if not stated otherwise. Probabilistic guarantees are calculated with a high probability guarantee of $\alpha=0.01$. We set the adaptive temperature scaling threshold $c_T=0.9$ for all experiments. This was done as the average confidence of all models remained considerably below $0.9$ during training. In our experiments, adaptive temperature thresholding has a negligible effect on the diversity of the model outputs using this threshold (see Section~\ref{sec:diversity_utility}). In the unlearning setting, for each generation we sample sequences that are 64 tokens long. For all generations we used top-p sampling with~$p=0.9$.

\subsection{Experimental setup for LLM alignment experiments}

We conduct our alignment experiments on the $100$ harmful behaviors dataset of JailbreakBench (JBB)~\citep{chao2024jailbreakbench}. Toxicity scores are derived from the Harmbench toxicity classifier~\citep{mazeika2024harmbench}, which provides the probability of an answer being rated as toxic. We conduct our evaluations on Phi-1.5~\citep{li2023textbooks}, Vicuna-7b-1.5~\citep{vicuna2023}, and Mistral-7b-instruct-v0.3~\citep{jiang2023mistral}. For each generation, we sample sequences that are 128 tokens~long. For all generations we used top-p sampling with $p=0.9$.

\newpage
\section{Effect of entropy optimization on model utility in unlearning}\label{app:utility}

In \autoref{tab:retain-utility} we summarize the results of \autoref{fig:utility} (a) and \autoref{fig:utility} (c).

\begin{table}[h!]
    \centering
    \footnotesize
    \caption{Utility of unlearned models using the proposed unlearning method for different retain/forget splits of the TOFU dataset and varying choices of entropy regularization. We observe no consistent negative effect of entropy regularization on the utility of an unlearned model.}
    \begin{minipage}[t]{0.2\textwidth}
        \centering
        $\lambda_f=1$
        \begin{tabular}{cr}
        \toprule
        $\lambda_r$ & Utility\textsubscript{99}\\
        \midrule
         0.000 & 0.496 \\
        -0.125 & 0.509 \\
        -0.250 & 0.504 \\
        -0.500 & 0.490 \\
        \bottomrule
        \end{tabular}
    \end{minipage}$\quad\quad$
    \begin{minipage}[t]{0.4\textwidth}
        \centering
        $\phantom{l_f}\lambda_{r}=0\phantom{l_f}$
        \begin{tabular}{crrr}
        \toprule
        $\lambda_f$  & Utility\textsubscript{90} & Utility\textsubscript{95} & Utility\textsubscript{99}\\
        \midrule
        0.000 & 0.503 & 0.503 & 0.499 \\
        0.125 & 0.488 & 0.491 & 0.503 \\
        0.250 & 0.488 & 0.487 & 0.512 \\
        0.500 & 0.492 & 0.500 & 0.500 \\
        1.000 & 0.492 & 0.469 & 0.492 \\
        \bottomrule
        \end{tabular}
    \end{minipage}
    \label{tab:retain-utility}
\end{table}

\section{Metric Guarantee Proofs}\label{appendix:proofs}

Note that confidence intervals have two bounds that share a significance level of \(\alpha\), meaning each bound uses a significance level of \(\alpha/2\). Consequently, since we propose metrics based on one bound only, 
our bounds can make use of the full significance level \(\alpha\).

Recall the definition of the Clopper-Pearson confidence interval \citep{clopper1934use}:
    \[
    B\left(\frac{\alpha}{2}; S_n,  n - S_n + 1\right) \leq p  \leq  B\left(1-\frac{\alpha}{2}; S_n + 1,  n - S_n\right)
    \]
    where $B(\hat{q}; a, b)$ is the $\hat{q}$th-quantile of the beta distribution
    with shape parameters $a$ and $b$.
We propose an unlearning metric based on the conservative Clopper-Pearson confidence bound as follows: 
\firstM{1}{}{1}
\binaryProp{1}{}{1}
\begin{proof}
    The statement follows directly from the definition of the Clopper-Pearson confidence intervals \citep{clopper1934use}.
\end{proof}

\secondM{2}{}{2}
\begin{proposition}
    With high probability of at least $1-\alpha$,
    metric $M_2(x)$ upper-bounds the probability that the next sample leaks more than x\% of the secret, $\Pr(X>x) \leq M_2(x)$ for all $x \in [0,1]$.
\end{proposition}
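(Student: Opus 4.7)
The plan is to derive the bound directly from the one-sided Dvoretzky–Kiefer–Wolfowitz (DKW) inequality that was already stated in the body of the paper, using the identity $\Pr(X > x) = 1 - F(x)$ to translate between tail probabilities and CDF values.

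First I would rewrite the target inequality in CDF form: showing $\Pr(X > x) \leq M_{gen}(x) = 1 - F_n(x) + \epsilon$ is equivalent to showing
\[
F_n(x) - F(x) \leq \epsilon \quad \text{for all } x \in [0,1].
\]
Next I would invoke the one-sided DKW inequality, which guarantees
\[
\Pr\!\left(\sup_{x\in\mathbb{R}} \bigl(F_n(x) - F(x)\bigr) > \epsilon\right) \leq e^{-2n\epsilon^2}.
\]
Setting $\epsilon = \sqrt{\ln(1/\alpha)/(2n)}$ makes the right-hand side exactly $\alpha$, so with probability at least $1-\alpha$ we have $F_n(x) - F(x) \leq \epsilon$ \emph{simultaneously} for every $x$. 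The uniformity is essential here, since the proposition claims the bound holds for all $x \in [0,1]$ at once under a single high-probability event, not pointwise.

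Finally I would conclude by rearranging: on this high-probability event,
\[
\Pr(X > x) = 1 - F(x) \leq 1 - F_n(x) + \epsilon = M_{gen}(x)
\]
for all $x \in [0,1]$, which is the claim. There is no real obstacle here — the result is essentially a restatement of one-sided DKW combined with the choice of $\epsilon$ that calibrates the tail probability to $\alpha$. The only point that warrants a sentence of care is emphasizing that because DKW controls the supremum over $x$, the single event of measure $\geq 1-\alpha$ gives the ``for all $x$'' conclusion without a union bound or multiple testing correction.
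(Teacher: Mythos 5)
Your proposal is correct and follows essentially the same route as the paper's own proof: both invoke the one-sided DKW inequality with $\epsilon = \sqrt{\ln(1/\alpha)/(2n)}$, use the uniformity of the supremum to get the simultaneous bound for all $x$, and rearrange $1-F(x) \leq 1-F_n(x)+\epsilon$. The only detail worth adding is that the one-sided DKW bound $e^{-2n\epsilon^2}$ is valid only for $\epsilon \geq \sqrt{\ln 2/(2n)}$, which is exactly why the framework restricts to $\alpha \leq \tfrac{1}{2}$.
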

\begin{proof}
   The Dvoretzky-Kiefer-Wolfowitz inequality guarantees 
   $$\Pr\left(\sup_{x\in\sR} F_n(x) - F(x) > \epsilon\right) \leq e^{-2n\epsilon^2} \quad\quad \textrm{ for all } \quad\quad\epsilon \geq \sqrt{\frac{\ln{1/2}}{-2n}} $$
   Choosing \(\epsilon = \sqrt{\frac{\ln(1/\alpha)}{2n}} \) for $\alpha \leq \frac{1}{2}$ we have:

   \begin{align*}
    & \Pr\left(\sup_{x\in\sR} F_n(x) - F(x) > \epsilon\right) \leq \alpha \\
    \Leftrightarrow & 
    \Pr\left(F_n(x) - F(x) > \epsilon\right) \leq \alpha \quad\quad \forall x\in\sR\\
    \Leftrightarrow & 
    \Pr\left(F_n(x) - \epsilon > F(x) \right) \leq \alpha \quad\quad \forall x\in\sR\\
    \Leftrightarrow & 
    1-\Pr\left(F_n(x) - \epsilon > F(x) \right) \geq 1-\alpha \quad\quad \forall x\in\sR\\
    \Leftrightarrow & 
    \Pr\left(F_n(x) - \epsilon \leq F(x) \right) \geq 1-\alpha \quad\quad \forall x\in\sR\\
    \Leftrightarrow & 
    \Pr\left(1-F_n(x) + \epsilon \geq 1-F(x) \right) \geq 1-\alpha \quad\quad \forall x\in\sR
   \end{align*}
   We can use the Dvoretzky-Kiefer-Wolfowitz inequality to construct a simultaneous confidence band:
    \[ 
        p(X>x) \in [1-F_n(x)-\epsilon, 1-F_n(x)+\epsilon]\quad\quad \forall x\in\sR
    \]
    where $\epsilon = \sqrt{\frac{\ln(2/\alpha)}{2n}}$. This follows directly 
    from the two-sided DKW inequality:
    $$\Pr[\sup_{x} |F_n(x) - F(x)| > \epsilon] \leq \alpha \quad\textrm{for}\quad\alpha=2e^{-2n\epsilon^2}$$
\end{proof}
Note that if in practice we have a fixed $\epsilon$ for a significance level $\alpha$ (for example if we have to guarantee tight bounds), then we can exactly quantify the number of Monte Carlo samples needed: $\alpha=2e^{-n\epsilon^2} \Leftrightarrow n = \frac{1}{\epsilon^2} \ln \left( \sqrt{\frac{1}{\alpha}}\right)$.

\thirdM{3}{}{3}
\expProp{3}{}{3}

\begin{proof}

    We exploit the relation between the CDF and the expectation: $\E[X] = 1- \int_0^1 F(x) dx$. We have
    \begin{align*}
        \E[X] & = 1-\int_0^1 F(x) \, dx \\
        & \overset{(1)}{\leq}  1- \sum_{i=0}^{K-1} (\tau_{i+1}-\tau_i) F(\tau_i) \\
        & \overset{(2)}{\leq}  1- \sum_{i=0}^{K-1} (\tau_{i+1}-\tau_i) (F_n(\tau_i) - \epsilon) \\
        & =   \underbrace{1- \sum_{i=0}^{K-1} \delta_i (F_n(\tau_i) - \epsilon)}_{\overline{\mu}} \\
    \end{align*}
    where inequality $(1)$ holds by lower-bounding the integral with the left Riemann sum, 
    which is a lower bound of the integral since the CDF is monotonically increasing.
    The second inequality $(2)$ holds due to the Dvoretzky-Kiefer-Wolfowitz inequality.   

    The lower bound follows analogously:
    \begin{align*}
        \E[X] & = 1-\int_0^1 F(x) \, dx \\
        & \overset{(1)}{\geq}  1- \sum_{i=1}^{K} (\tau_{i}-\tau_{i-1}) F(\tau_i) \\
        & \overset{(2)}{\geq}  1- \sum_{i=1}^{K} (\tau_{i}-\tau_{i-1}) (F_n(\tau_i) + \epsilon) \\
        & =   \underbrace{1- \sum_{i=1}^{K} \delta_{i-1} (F_n(\tau_i) + \epsilon)}_{\underline{\mu}} \\
    \end{align*}
    where inequality $(1)$ holds by upper-bounding the integral with the right Riemann sum, 
    which is an upper bound of the integral since the CDF is monotonically increasing.
    The second inequality $(2)$ holds due to the Dvoretzky-Kiefer-Wolfowitz inequality again.   
\end{proof}

Following the variance bounds introduced in \citep{schuchardt2023localized} we propose the following bound on the standard deviation as unlearning metric:

\forthM{4}{}{4}
\stdProp{4}{}{4}
\begin{proof}

    We have $\Var[X] = \E[(X-\E[X])^2] = \int_0^1 (x-\E[X])^2 f_X(x) \, dx$
    \begin{align*}
        & = \sum_{i=0}^{K-1}\int_{\tau_{i}}^{\tau_{i+1}} (x-\E[X])^2 f_X(x) \, dx \\
        & = \sum_{i=0}^{K-1}\int_{\tau_{i}}^{\tau_{i+1}} (x-\E[X])^2 f_X(x) \, dx \\
        & \leq \sum_{i=0}^{K-1}\eta_i\int_{\tau_{i}}^{\tau_{i+1}}  f_X(x) \, dx \quad\quad\textrm{for}\quad \eta_i = \max_{\substack{\kappa\in\{\tau_i, \tau_{i+1}\}\\ a\in \{\underline{\mu}, \overline{\mu}\}}} (\kappa-a)^2 \\
        & = \sum_{i=0}^{K-1}\eta_i (F(\tau_{i+1})-F(\tau_i)) \\
        & = \eta_{K-1} - \eta_0 F(\tau_0) + \sum_{i=1}^{K-1} \delta_i F(\tau_{i})  \quad\quad\textrm{for}\quad\delta_i = \eta_{i-1} - \eta_i \\
        & \leq \underbrace{\eta_{K-1} - \eta_0 \underline{F}(\tau_0) 
        + \sum_{i=1}^{K-1} \delta_i \left[ \sign(\delta_i)\overline{F}(\tau_i) + (1-\sign(\delta_i))\underline{F}(\tau_i) \right]}_{\overline{\sigma}^2}
    \end{align*}
    
    From $\Var[X] \leq \overline{\sigma}^2$ follows $\sqrt{\Var[X]} \leq \overline{\sigma}$ since the square root is monotonically increasing.
    
\end{proof}

\end{document}